\documentclass[10pt,twocolumn,letterpaper]{article}

\usepackage{cvpr}              
\usepackage{microtype}      
\usepackage[table]{xcolor}
\definecolor{darkgrey}{rgb}{0.4,0.4,0.4} 
\usepackage{graphicx} 
\usepackage{amsthm}
\usepackage{amsmath, amssymb}
\usepackage{algorithm}
\usepackage{algpseudocode}
\usepackage{longtable}
\usepackage{booktabs}

\newtheorem{proposition}{Proposition}

\newcommand{\Kmat}[0]{\ensuremath{{\bf K}} }

\newcommand{\Qmat}[0]{\ensuremath{{\bf Q}} }

\newcommand{\Vmat}[0]{\ensuremath{{\bf V}} }
\newcommand{\Wmat}[0]{\ensuremath{{\bf W}} }

\newcommand{\bv}[0]{\ensuremath{\boldsymbol{b}} }
\newcommand{\cv}[0]{\ensuremath{\boldsymbol{c}} }

\newcommand{\pv}[0]{\ensuremath{\boldsymbol{p}} }

\newcommand{\xv}[0]{\ensuremath{\boldsymbol{x}} }

\newcommand{\zv}[0]{\ensuremath{\boldsymbol{z}} }

\newcommand{\muv}[0]{\ensuremath{\boldsymbol{\mu}} }

\usepackage{enumitem}
\usepackage{multirow} 
\usepackage{caption}
\usepackage{tabularx}
\usepackage[table]{xcolor}
\usepackage{makecell}  

\definecolor{ccr}{RGB}{10,110,150}  
\definecolor{cvprblue}{rgb}{0.21,0.49,0.74}
\usepackage[pagebackref,breaklinks,colorlinks,allcolors=cvprblue]{hyperref}

\def\paperID{3562} 
\def\confName{CVPR}
\def\confYear{2026}

\title{Improving Sparse Autoencoder with Dynamic Attention}

\author{Dongsheng Wang, Jinsen Zhang, Dawei Su, Hui Huang\thanks{Corresponding author.} \\
College of Computer Science and Software Engineering, Shenzhen University, China\\
\texttt{\{dongshengwang,2400101100,2023110003\}@szu.edu.cn},
\texttt{hhzhiyan@gmail.com}\\
}

\begin{document}
\maketitle
\begin{abstract}
Recently, sparse autoencoders (SAEs) have emerged as a promising technique for interpreting activations in foundation models by disentangling features into a sparse set of concepts. However, identifying the optimal level of sparsity for each neuron remains challenging in practice: excessive sparsity might lead to poor reconstruction, whereas insufficient sparsity harms interpretability. While existing activation functions such as ReLU and TopK provide certain sparsity guarantees, they typically require additional sparsity regularization or cherry-picked hyperparameters. We show in this paper that adaptive sparse attention mechanisms using sparsemax can bridge this trade-off, due to their ability to determine the number of concepts in a data-dependent manner. 
Specifically, we first explore a new class of SAEs based on the cross-attention architecture with the latent features as queries and the learnable dictionary as the key and value matrices. To encourage sparse pattern learning, we employ a sparsemax-based attention strategy that automatically infers a sparse set of concepts according to the complexity of each neuron, resulting in a more flexible and efficient activation function. Through comprehensive evaluation and visualization, we show that our approach successfully achieves lower reconstruction loss while producing high-quality concepts. Moreover, the sparsity level automatically determined by our approach can serve as tuning guidance to improve existing SAEs. The code is available \href{https://github.com/qyj-bkjx/Sparsemax-SAE}{https://github.com/qyj-bkjx/Sparsemax-SAE}.
\end{abstract}    
\vspace{-5pt}
\section{Introduction}
\label{sec:intro}
The impressive gains in reasoning and accuracy of recent large-scale machine learning models like CLIP~\cite{radford2021learning, cherti2023reproducible} and GPT~\cite{radford2019language,achiam2023gpt} have generally come at the cost of a loss of transparency into their functioning. Typically, neurons in these models are polysemantic, and they respond to seemingly unrelated inputs simultaneously. This can be explained as superposition~\cite{elhage2022superposition}, where models learn more independent features than they have neurons by viewing each feature as a linear combination of neurons. Fortunately, sparse autoencoders (SAEs)~\cite{olshausen1997sparse,lee2006efficient,robert_sae,hindupur2025projecting,fel2025archetypal} have emerged as a promising technique for addressing this fundamental challenge by learning an overcomplete yet sparse representation of neural activations, effectively disentangling these superimposed features into more interpretable concepts~\cite{templeton2024scaling,gao2024scaling,shi2025route, wang2022representing}.

\begin{figure}[!t]
\centering
\includegraphics[width=0.99\linewidth]{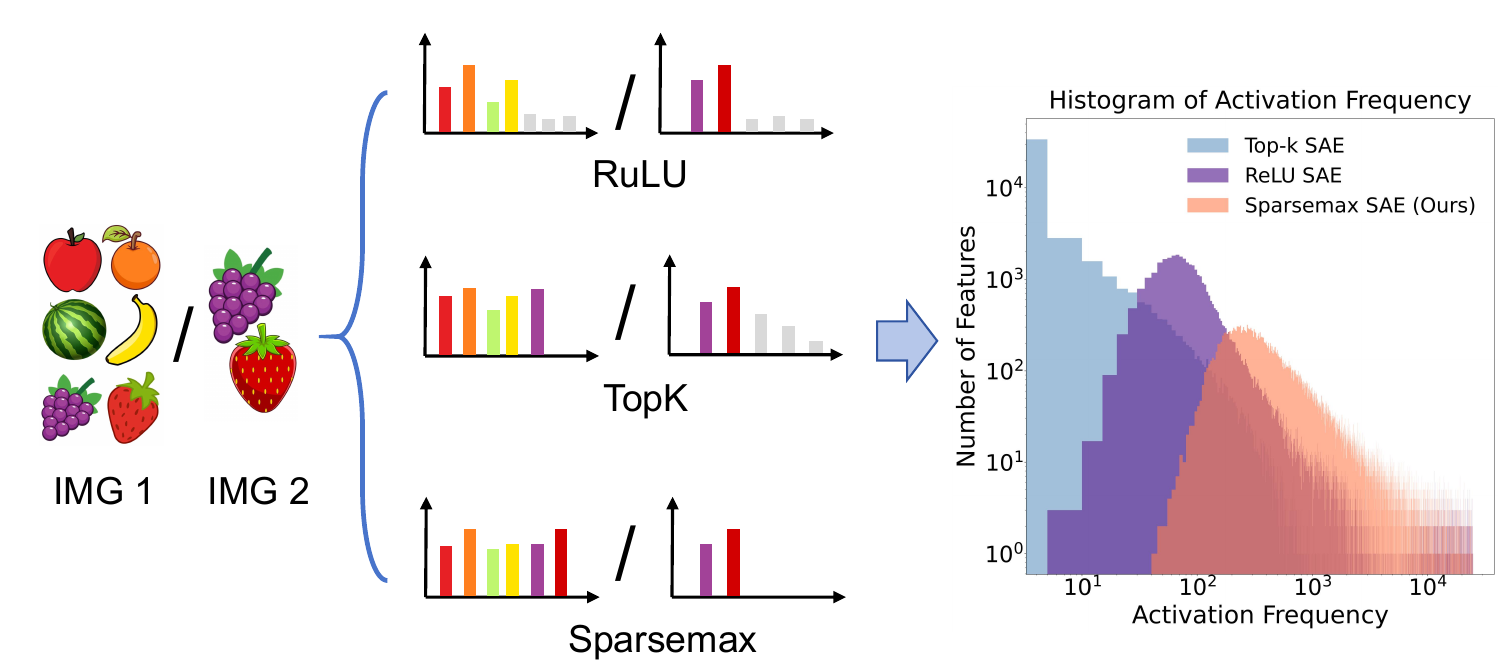}
\caption{\small{Comparisons of our proposed Sparsemax SAE with previous SAEs (left) and histogram of activation frequency of the learned concepts on the ImageNet dataset (right). ReLU-based SAEs often suffer from the feature shrinkage issue, while TopK-based models tend to produce dead concepts as they only keep $K$ largest concepts. In contrast, our Sparsemax-based SAE dynamically select the number of concepts based on feature complexity, thereby discovering more concepts.}}
\label{motivation_pic1}
\vspace{-2mm}
\end{figure}

Despite its successes in reversing the effects of superposition, determining the optimal sparse level for the latent features remains an open problem. For example, assigning too many concepts to each feature may compromise interpretability, while insufficient concepts can degrade reconstruction—both scenarios lead to suboptimal concept learning. Early SAEs~\cite{robert_sae,patchsae} adopt the ReLU as the activation function and combine it with $\mathcal{L}_1$ regularization to balance sparsity and reconstruction. However, the $\mathcal{L}_1$ penalty often leads to feature shrinkage, where all activations tend toward zero (Seen as Fig.~\ref{motivation_pic1}). GatedReLU~\cite{gatedrelu} suggests decomposing ReLU into direction selection and magnitude estimation functions via the gate mechanism. JumpReLU~\cite{rajamanoharan2024jumping} finds that zeroing out activations below a positive threshold is a better option. However, these models require additional regularizations to prompt the sparsity, and the balancing coefficient needs to be carefully selected to achieve satisfactory performance~\cite{hindupur2025projecting}.

 On the other hand, recent attempts propose to limit the number of concepts explicitly. For example, TopK SAEs~\cite{gao2024scaling} employ K-sparse autoencoder~\cite{makhzani2013k} to directly choose the $K$ largest concepts and zero the rest. This approach eliminates the need for an explicit sparsity penalty but imposes a rigid constraint on the number of active concepts per sample. BatchTopK SAEs~\cite{bussmann2024batchtopk} relax the top-K constraint to a batch-level constraint, enabling the SAEs to represent each sample within a batch with a variable number of concepts, resulting in more flexible and efficient utilization of the concept dictionary. Unfortunately, both TopK and BatchTopK view $K$ as a hyperparameter, and how to set $K$ properly remains an unsolved problem.

In this paper, we aim to improve SAEs with adaptive sparse attention mechanisms under the cross-attention framework. Moving beyond the traditional single-layer MLP-style encode-decode structure, we here explore a new class of SAEs based on the transformer architecture due to its successes in various tasks~\cite{vaswani2017attention,radford2018improving,peebles2023scalable,wang2024instruction}. Specifically, we first view the to-be-learned dictionary as a set of concept vectors, which will be used as the key and value matrices via the corresponding projections. For each latent feature in the neural networks, we view it as the query and apply the cross-attention operation to obtain the reconstructed feature. Intuitively, the calculation of attention weights can be viewed as the encoding stage of SAEs, whose output measures the relevance score between the query and concepts. Notably, this transformer-based SAE connects the encoding and decoding stages by sharing the same concept vectors, rather than viewing them as two independent MLPs, achieving coherent, high-quality concept learning.

With the designed transformer-based architecture, it is natural to replace the softmax with any well-studied sparse attention operations~\cite{attention1,attention2,goncalves2025adasplash,duan2021sawtooth}. Here, we adopt sparsemax~\cite{martins2016softmax} as our solution. On one hand, sparsemax is differentiable everywhere, and can be easily applied with gradient-based optimization. On the other hand, sparsemax has the ability to assign exactly zero probability to some of its outputs, sharing the same motivation as SAEs. Most importantly, unlike previous works that require hyperparameter $K$ to constrain the number of activations, sparsemax dynamically estimates a threshold function for each sample according to its complexity. This enables the model to output the most relevant concepts while truncating others to zero. As shown in Fig~\ref{motivation_pic1}, TopK-based SAEs sometimes fail to correctly assign the number of concepts due to the incorrect $K$. In contrast, our sparsemax successfully assigns six concepts to complex images and two concepts to simple images.
Our sparsemax can be viewed as a more precise version of BatchTopK, where we set $K$ at the sample level rather than the batch level, thereby achieving greater flexibility and accuracy in sparse estimation.

To sum up, our contributions are as follows:
\begin{itemize}
    \item We formalize a novel transformer-based SAEs under the cross-attention framework, which bridges the gap between the encoder and decoder of SAEs by sharing the same concept vectors, resulting in more coherent concept learning.
    \item A novel sparsemax function is developed to replace the original softmax function in the cross-attention operation. Sparsemax can determine the number of activations dynamically for each sample, without any regularization or hard TopK truncation.
    \item We provide extensive validation across image and text tasks, demonstrating that our approach not only captures coherent concepts but also achieves superior reconstruction results.
\end{itemize}



\section{Related Work}
\label{sec:related_work}

\paragraph{Mechanistic interpretability with SAEs.} 
Mechanistic interpretability aims to uncover and explain the black box characteristics, enabling models to understand input data and generate reasonable responses~\cite{rai2024practical}. Recently, Sparse Autoencoder (SAEs) have been applied to language models due to their inherent ability to generate interpretable latent concepts~\cite{sharkey2023taking,robert_sae}. Building upon the standard architecture with ReLU activation~\cite{bricken2023monosemanticity}, a series of studies have developed numerous improvements to the original design. For example, Gated SAE~\cite{gatedrelu}, Switch SAE~\cite{mudide273233368efficient} aim to design complex encoders to ensure sparse outputs; Focusing on the feature shrinkage issue~\cite{bricken2023monosemanticity}, JumpReLU~\cite{rajamanoharan2024jumping} introduces threshold parameters for each concept to truncate the concepts with small scores. Topk-based SAEs~\cite{bussmann2024batchtopk, gao2024scaling} directly keep $K$ concepts with large scores and zero out the others; In terms of the sparsity regularization, P-annealing~\cite{karvonen2408measuring} and mutual feature regularization (MFR)~\cite{marks2024enhancing} are designed as alternatives to $L_0$ and $L_1$ loss. 

Inspired by the successful application of SAE in LLMs, PatchSAE and its variants explore to train SAEs on top of the CLIP and DiNOv2~\cite{qiang2023interpretability,patchsae,stevens2025sparse,dinov2}, showing great potential in interpreting visual concepts. Additionally, there has also been interest in steering the generation process of diffusion models via SAEs~\cite{cywinski2025saeuron,gao2024scaling}. More recently, several studies have increasingly applied SAEs to multimodal LLMs~\cite{zhang2025large,pach2025sparse}, and show that SAE can learn shared concepts across the vision and text modalities. However these model primarily extend existing SAEs (such as ReLU and TopK) to the vision domain, without designing new SAE architectures.

\paragraph{Sparse Attention Mechanisms.}
Traditional attention mechanisms commonly employ the softmax transformation to convert scores into probability distributions~\cite{vaswani2017attention}. 
However, softmax produces dense distributions, assigning non-zero attention weights to all elements, which limits interpretability and efficiency~\cite{li2023patchct}.
SlidingWindow is a commonly used strategy that allows the query to compute attention only within a fixed windlow~\cite{xiao2023efficient,child2019generating,beltagy2020longformer}. However, these models rely on pre-defined sparse patterns, limiting their potential for application in SAEs. Top-K attention~\cite{gupta2021memory} shares similar ideas with TopK SAEs and selects a set of keys with the $K$ largest scores. SeerAttention~\cite{gao2024seerattention} separates queries and keys into spatial blocks and performs blockwise selection for sparsity. $\alpha$-entmax attention~\cite{peters2019sparse} provides natural, input-dependent sparsity patterns with an exact and differentiable transformation, attracting increasing attention in recent research. For example, Adaptively sparse transformers~\cite{correia2019adaptively} employ $\alpha$-entmax attention where attention heads can learn $\alpha$ dynamically. SparseFinder~\cite{treviso2022predicting} aims to address the efficiency issues of $\alpha$-entmax by predicting a prior. Sparse Flash Attention~\cite{gonccalvesadasplash,dao2022flashattention} combines the efficiency of GPU-optimized algorithms with the sparsity benefits of $\alpha$-entmax, showing great runtime and memory efficiency. Sparsemax~\cite{sparsemax} can be viewed as a special case of $\alpha$-entmax with $\alpha=2$. It maps inputs onto the probability simplex while allowing exact zeros in the output. Moreover, sparsemax yields piecewise-linear activations and a well-defined Jacobian, enabling efficient gradient computation. It has been successfully applied in multi-label classification and attention-based networks, producing more selective and interpretable attention maps without sacrificing differentiability~\cite{sparsemax,jin2025text,mudarisovlimitations}.

In this paper, we aim to introduce a new transformer-based SAEs based on the sparsemax attention. This approach enables SAE to be trained solely with the reconstruction loss, eliminating the need for additional penalty regularization or hyperparameter tuning. The proposed SAEs can also be easily applied to both visual and textual domains.

\section{Methodology}
In this section, we first briefly review SAEs, then introduce our proposed model in detail.

\begin{figure*}[!h]
\centering
\includegraphics[width=0.95\linewidth]{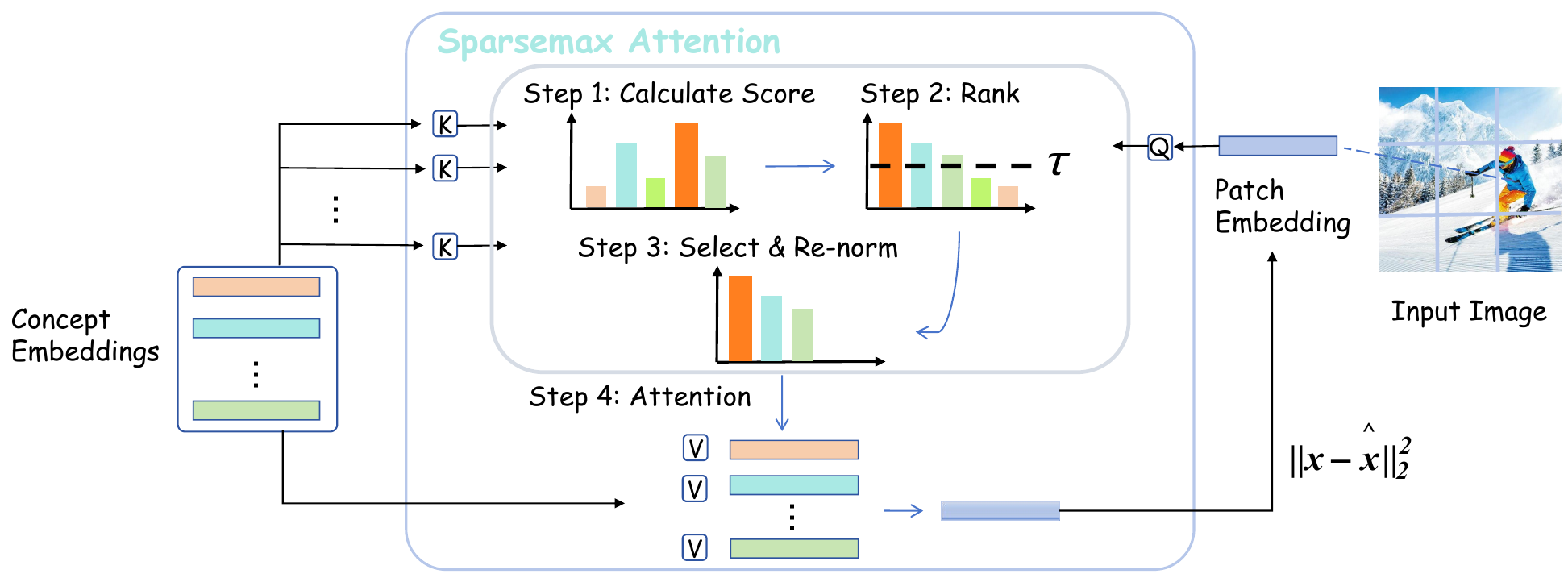}
\caption{\small{Framework of our Saprsemax SAE, which reconstructs the input feature under the transformer architectures, and the sparsemax attention is employed to dynamically determine the number of concepts by estimating the threshold $\tau$. }}
\label{framework}
\end{figure*}

\subsection{SAE and its variants}
To disentangle the polysemantic activations (or features) $\xv \in \mathbb{R}^d$ into a set of monosemantic and interpretable concepts $\mathcal{C} = \{ \cv_1, \cv_2,...,\cv_M \} \in \mathbb{R}^{d \times M}$, where $M \gg d$ represents the concept space dimension, SAEs typically represent $\xv$ as a sparse linear combination of these concepts under the encoder-decoder framework:
\begin{equation}\label{sae}
\begin{aligned}
    \zv &= \sigma (\Wmat_{\text{enc}} (\xv - \bv_{\text{enc}})),\\
    \hat{\xv} &= \Wmat_{\text{dec}} \zv + \bv_{\text{dec}},
\end{aligned}
\end{equation}
where $\Wmat_{\text{enc}} \in \mathbb{R}^{M \times d}$ and $\Wmat_{\text{dec}} \in \mathbb{R}^{d \times M}$ denote the weight matrices of the single-layer encoder and decoder, respectively. $\bv_{\text{enc}}, \bv_{\text{dec}} \in \mathbb{R}^d$ are two bias terms. The columns of $\Wmat_{\text{dec}}$ are the to-be-learned concepts $\mathcal{C}$, and the reconstruction $\hat{x}$ is obtained by weighting these concepts via $\zv$. To prompt the sparse combination, various structures of $\sigma$ have been developed in previous studies:

\noindent\textbf{ReLU-based $\sigma$}. Early SAEs often employ the ReLU activation function to generate sparse weights due to its simplicity of implementation. To address the feature shrinkage issue in ReLU, where activations in $\zv$ tend toward zero, GatedReLU and JumpRelu are proposed. Although effective in learning sparse $\zv$, these models require additional sparse regularizations, for example:
\begin{equation}
    \mathcal{L} = ||\xv - \hat{x}||^2_2 + \lambda S(\zv),
\end{equation}
where $S$ denotes the function penalizing non-sparse decompositions, \textit{e.g.}, $\mathcal{L}_1$ in ReLU and GatedReLU and $\mathcal{L}_0$ in JumpReLU. $\lambda$ sets the trade-off between sparsity and reconstruction, requiring careful tuning to achieve a balance between the two.

\noindent\textbf{TopK-based $\sigma$}. ~\citet{gao2024scaling} suggests that the TopK is another option to learn sparse $\zv$, where only the $K$ largest concepts are kept for each sample, with all others set to zero. Due to its explicit sparsity selection, TopK SAEs can be trained via the reconstruction loss. Built upon TopK, BatchTopK is further developed to replace the sample-level TopK operation with a batch-level BatchTopK function, where the top $n \times K$ activations across the entire batch of $n$ samples are selected, while all others are set to zero. Compared to ReLU-based SAEs, TopK-based SAEs empirically show a better balance between the sparsity and reconstruction. However, how to choose the optimal $K$ in these models remains an open problem.

In this paper, we aim to further relax the constraints of BatchTopK based on the sparsemax function, which dynamically determines the number of activations according to the feature's complexity, rather than setting a fixed $K$.

\subsection{Sparsemax SAE}
As illustrated in Fig.~\ref{framework}, we improve SAEs in two aspects: 1) a transformer-based architecture is designed to connect the encoder and decoder by sharing the same concept vectors; and 2) within the transformer-based framework, each sample has the ability to determine its sparse level dynamically via sparsemax attention. Below, let us introduce each of them in detail.

\paragraph{Transformer-based SAE.} Inspired by the great successes of transformer-based structures in various fields~\cite{vaswani2017attention,radford2018improving,peebles2023scalable,wang2023tuning}, we aim to explore a novel SAE with the cross-attention mechanism. Mathematically , we rewrite Eq.~\ref{sae} as:
\begin{equation} \label{cross_att}
\begin{aligned}
    \Qmat = \xv ^T \Wmat_Q , & \quad \Kmat = \mathcal{C} ^T \Wmat_K , \quad \Vmat = \mathcal{C}^T \Wmat_V,  \\ 
    \hat{x} &= \sigma(\frac{\Qmat \Kmat^T}{\sqrt{d}}) \Vmat,
\end{aligned}
\end{equation}
where $\Wmat_Q, \Wmat_K, \Wmat_V \in \mathbb{R}^{d \times d}$ denotes the query, key, and value projections, respectively. $\sigma$ is the sparsemax function, which will be introduced later. Intuitively, Eq.~\ref{cross_att} views the input feature as a query and reconstructs it using a set of concepts via the cross-attention framework. Compared to MLP-based SAEs in Eq.~\ref{sae} that directly output $\zv$ via $\Wmat_{enc}$, our approach models the activation weights as the similarity score of the query and concepts explicitly, thereby enabling more precise weight estimation. For example, a higher $\zv$ denotes a closer distance between the query and concepts in the embedding space. More importantly, unlike Eq.~\ref{sae} views $\Wmat_{\text{enc}}$ and $\Wmat_{\text{dec}}$ as two independent learnable projections, both the key and value in Eq.~\ref{cross_att} originate from the same concept $\mathcal{C}$. This reinforces the synergy between the concept vector $\Vmat$ and its weights during the weighting (decoding) stage, showing stronger reconstruction capabilities.

\paragraph{Sparsemax Attention.} One of the core ideas of SAEs is to assign a sparse set of concepts for each latent feature, while the widely-used softmax function in transformers often outputs dense activations~\cite{vaswani2017attention}. To this end, we introduce the sparsemax attention into our transformer-based SAE, as it is capable of producing exactly zero value to low-scoring concepts. Let $\zv = \Qmat \Kmat^T \in \mathbb{R}^M$ denote the similarity score between the query and $M$ concepts, sparsemax aims to project $\zv$ onto the probability simplex with Euclidean distance:
\begin{equation}\label{sparsemax4}
    \begin{aligned}
        \mathrm{sparsemax}(\zv) = \arg\min_{p \in \Delta^{M-1}} \| p - z \|^2,
    \end{aligned}
\end{equation}
where $\Delta^{M-1}:=\left\{\, p \in \mathbb{R}^M \;\middle|\; 
p_i \ge 0 \;\;, \sum_{i=1}^M p_i = 1 \right\}$ is the $(M-1)$-dimensional simplex. Sparsemax aims to find the point inside the simplex that is nearest to $\zv$. Therefore, for small coordinates of $\zv$, the closest point in the simplex will force them to zero, in which case $\mathrm{sparsemax}(\zv)$ becomes sparse. Fortunately, Eq.~\ref{sparsemax4} can be solved with linear-time algorithms~\cite{michelot1986finite,duchi2008efficient}.

\begin{proposition} \label{prop1}
The closed-form solution of Eq.~\ref{sparsemax4} is:
\begin{equation}
    \begin{aligned}
        \mathrm{sparsemax}(\zv)_m = \mathrm{max}(\zv_m-\tau, 0),
    \end{aligned}
\end{equation}
where $\tau$ is a threshold calculated so the result sums to 1: $\sum_{j \in S}(\zv_j - \tau)=1$ for every selected $\zv_j$, \textit{e.g.}, $S=\{j: \zv_j >\tau\}$. Furthermore, the support set $S$ (and hence $\tau$) can be computed efficiently via sorting: if we sort $z$ in descending order as $z_{(1)} \ge \cdots \ge z_{(M)}$, define
\begin{equation}
k = \max\left\{\, r \in \{1,\dots,M\} \,\middle|\, z_{(r)} + \frac{1 - \sum_{i=1}^{r} z_{(i)}}{r} > 0 \right\},
\end{equation}
then,
\begin{equation}
\tau = \frac{\sum_{i=1}^{k} z_{(i)} - 1}{k}.
\end{equation}
\end{proposition}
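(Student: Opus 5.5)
The plan is to treat Eq.~\ref{sparsemax4} as a strictly convex quadratic program over a polyhedron and read off its solution from the KKT conditions. The objective $\|p-\zv\|^2$ is strictly convex and $\Delta^{M-1}$ is nonempty, compact and convex, so a unique minimizer exists. All constraints are affine, so Slater/linearity gives that the KKT conditions are necessary and sufficient. Introduce a multiplier $\tau$ for $\sum_i p_i = 1$ and multipliers $\mu_i \ge 0$ for $p_i \ge 0$. Stationarity reads $2(p_m - \zv_m) + 2\tau - \mu_m = 0$, after absorbing the factor $2$ into the multipliers. Combining this with complementary slackness $\mu_m p_m = 0$ gives two cases: if $p_m>0$ then $p_m = \zv_m - \tau$, and if $p_m = 0$ then $\mu_m = 2(\tau - \zv_m) \ge 0$, i.e.\ $\zv_m \le \tau$. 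Together these are exactly $p_m = \max(\zv_m - \tau, 0)$ with $S = \{j : \zv_j > \tau\}$. Primal feasibility $\sum_m p_m = 1$ then becomes $\sum_{j\in S}(\zv_j - \tau) = 1$, which is the first claim.

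Next I will show that $\tau$ is well defined by the scalar equation $g(\tau) := \sum_m \max(\zv_m-\tau,0) = 1$. The function $g$ is continuous and piecewise linear. It is strictly decreasing wherever it is positive, tends to $+\infty$ as $\tau\to-\infty$, and equals $0$ for $\tau \ge \max_m \zv_m$. Hence there is a unique root. Rather than resting solely on KKT necessity, I will also verify directly that $p$ defined this way satisfies all KKT conditions, which settles optimality by sufficiency.

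For the sorting characterization, sort $z_{(1)}\ge\cdots\ge z_{(M)}$ and set $\tau_r := (\sum_{i\le r} z_{(i)} - 1)/r$. The defining condition for $k$ is $z_{(r)} > \tau_r$. Since the support of the optimum consists of the largest entries, it has the form $\{(1),\dots,(k^*)\}$ for some $k^*$, and $\tau = \tau_{k^*}$. The key step, and the main obstacle, is showing that $k^*$ coincides with $k = \max\{r : z_{(r)} > \tau_r\}$. I will prove this through a monotonicity statement: the set $\{r : z_{(r)} > \tau_r\}$ is an initial segment $\{1,\dots,k\}$.

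To establish it, I will use the recursion $r\tau_r = (r-1)\tau_{r-1} + z_{(r)}$. This gives $z_{(r)} - \tau_r = \frac{r-1}{r}(z_{(r)} - \tau_{r-1})$. It shows two things. First, if $z_{(r)} \le \tau_{r-1}$ then $\tau_r \le \tau_{r-1}$ but still $z_{(r)}\le\tau_r$. Second, using $z_{(r+1)}\le z_{(r)}$, once the condition fails at $r$ it fails at $r+1$. The case $r=1$ always holds, since $z_{(1)} > z_{(1)}-1$. Then for $r=k$ we have $z_{(k)} > \tau_k$. At $k+1$ the condition fails, so $z_{(k+1)} \le \tau_{k+1} \le \tau_k$ by the identity above. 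Hence exactly the top $k$ entries exceed $\tau_k$, which means $g(\tau_k) = \sum_{i\le k}(z_{(i)}-\tau_k) = 1$. By uniqueness of the root, $\tau = \tau_k$. Ties in the sorting need only a brief remark: equal values lie on the same side of $\tau$, because the inequality defining $S$ is strict.
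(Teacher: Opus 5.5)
Your proposal is correct, and its first half, the KKT derivation of $p_m=\max(z_m-\tau,0)$ followed by summing over the support to get $\sum_{j\in S}(z_j-\tau)=1$, is the same argument as the paper's appendix.

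Where you go beyond the paper is the sorting step. The paper argues that the support is a top-$k$ set and then simply asserts that ``the correct $k$ is the largest integer for which the $k$-th sorted element exceeds this threshold.'' It never shows that the maximal $r$ with $z_{(r)}>\tau_r$ is the support size, as opposed to some other $r$ that also satisfies the inequality. Your identity $z_{(r)}-\tau_r=\frac{r-1}{r}\,(z_{(r)}-\tau_{r-1})$ shows that the admissible $r$ form an initial segment $\{1,\dots,k\}$ and that $z_{(k+1)}\le\tau_k$. Together with uniqueness of the root of $g(\tau)=\sum_m\max(z_m-\tau,0)=1$, this is exactly what closes that step.

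Similarly, your appeal to strict convexity and to KKT sufficiency under affine constraints supplies existence, uniqueness and optimality. The paper's derivation uses only necessary conditions and takes these for granted.

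So the route is the same, but your version is a complete proof, whereas the paper's is only a sketch at the key identification $k^*=k$.
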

\begin{proof}
We provide a detailed derivation in the appendix.
\end{proof}

Unlike TopK-based algorithms that truncate $\zv$ by setting a hard threshold, Prop.~\ref{prop1} suggests a dynamical $\tau$ by measuring the content complexity of the input feature. For example, if the query feature consists of multiple concepts, the $\zv$ tends to contain many comparable values, resulting in a large set $S$. Conversely, if the query feature represents pure concepts, the resulting $S$ becomes very small. We summarize the Sparsemax algorithm in Alg.~\ref{sparsemax}.

\begin{algorithm} 
\caption{\textbf{Sparsemax Attention}}
\label{sparsemax}
\begin{algorithmic}[1]
\State \textbf{Input:} $\zv$
\State Sort $\zv$ as $z_{(1)} \ge \cdots \ge z_{(M)}$
\State Find $k(\zv) := \max\left\{\, r \in [M] \,\middle|\, z_{(r)} + \frac{1 - \sum_{i=1}^{r} z_{(i)}}{r} > 0 \right\}$, where $[M]:=\{1,\dots,M\}$
\State Define $\tau(\zv) = \frac{\sum_{i=1}^{k} z_{(i)} - 1}{k{\zv}}$.
\State \textbf{Output:} $\mathbf{p}$ such that $p_i = max\{0, z_i - \tau(\mathbf{z})\}$
\end{algorithmic}
\end{algorithm}

\begin{figure*}[!h]
    \centering
    \includegraphics[width=0.83\linewidth]{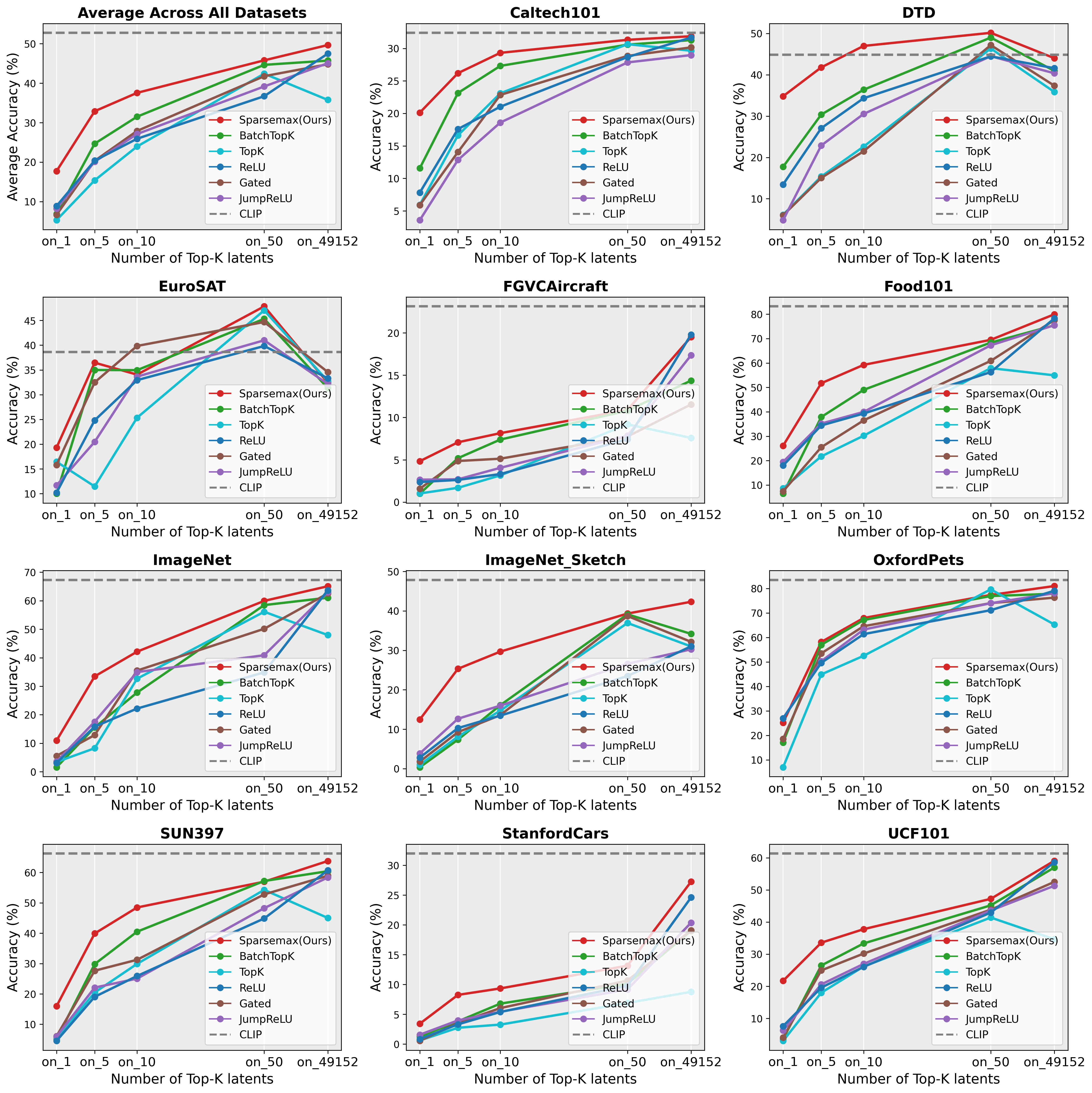}
    \caption{\small{Comparisions of zero-shot image classfication using top-n concepts on 11 datasets. All results are calculated as the mean value of three runs with different random seeds. CLIP denotes the results using the original image features in CLIP, and on\_49152 denotes all concepts are used.}}
    \label{fig:fsl}
    \vspace{-3mm}
\end{figure*}

\section{Experiments}
In this section, we first outline the experimental setup, followed by the evaluation of Sparsemax SAEs. We conduct experiments in both visual and textual domains and compare our approach with recent advances in terms of image classification and reconstruction tasks. Finally, we visualize the learned concepts at both the image and patch levels, revealing clear and interpretable visual patterns behind these concepts.


\begin{table*}[!ht]
  \centering
  \caption{\small{ NMSE scores with various dictionary sizes $M$ on the OpenWeb and WikiText-103 test datasets.}}
  \label{tab:nmse}
  \scalebox{0.9}{
  \begin{tabular}{lcccc|cccc}
    \toprule
    \multirow{2}{*}{\textbf{Method}} 
      & \multicolumn{4}{c}{\textbf{OpenWeb}} 
      & \multicolumn{4}{c}{\textbf{WikiText-103}} \\
    \cmidrule(lr){2-5} \cmidrule(lr){6-9}
      & $M{=}3072$ & $M{=}6144$ & $M{=}12288$ & $M{=}24576$
      & $M{=}3072$ & $M{=}6144$ & $M{=}12288$ & $M{=}24576$ \\
    \midrule
    RELU    & 0.064 & 0.064 & 0.064 &  0.059 & 0.064 & 0.064 & 0.064 &  0.064 \\
    JumpRELU    & 0.051 & 0.050 & 0.050 &  0.051 & 0.058 & 0.056 & 0.055 & 0.567 \\
    Gated       &  0.078  &  0.092  &  0.129  & 0.489  &0.088  & 0.106&  0.196 &0.527 \\
    TopK        & 0.014 & 0.059 & 0.010 & 0.055 & 0.024 & 0.063 & 0.018 & 0.061 \\
    BatchTopK   & 0.014 & 0.061 & 0.060 & 0.060 & 0.024 & 0.064 & 0.018 & 0.062 \\
    \textbf{Saprsemax SAE (Ours)} & \textbf{0.005} & \textbf{0.038} & \textbf{0.004} & \textbf{0.039}
                    & \textbf{0.008} & \textbf{0.046} & \textbf{0.007} & \textbf{0.045} \\
    \bottomrule
  \end{tabular}}
\end{table*}

\begin{table*}[!ht]
  \centering
  \caption{\small{CE degradation with various dictionary sizes $M$ on the OpenWeb and WikiText-103 test datasets.}}
  \label{tab:ce}
  \scalebox{0.9}{
  \begin{tabular}{lcccccccc}
    \toprule
    \multirow{2}{*}{\textbf{Method}} 
      & \multicolumn{4}{c}{\textbf{OpenWeb}} 
      & \multicolumn{4}{c}{\textbf{WikiText-103}} \\
    \cmidrule(lr){2-5} \cmidrule(lr){6-9}
      & $M{=}3072$ & $M{=}6144$ & $M{=}12288$ & $M{=}24576$
      & $M{=}3072$ & $M{=}6144$ & $M{=}12288$ & $M{=}24576$ \\
    \midrule
    Relu       &  -4.709   & -4.656 &  -4.614   & -1.562  &  -4.709   & -4.656 &  -4.614   & -4.615 \\
    JumpRELU        &  -0.586  &  -1.300  &  -0.928  &  -1.151  & -3.272  & -3.060  & -2.980  & -3.260  \\
    Gated       &  -1.738  &  -1.422 &   -2.679  & -1.309   &-6.637 & -5.791& -5.453 &-5.557  \\
    TopK            &  0.209  & -1.778  &  0.306   & -1.261  & -0.898 & -4.587  & -0.569 & -4.251 \\
    BatchTopK       &  0.196   & -1.742 &  0.302   & -1.740  & -0.867 & -4.659  & -0.566 & -4.356 \\
    \midrule
    \textbf{Saprsemax SAE (Ours)} & \textbf{0.031} & \textbf{-1.516} & \textbf{0.012} & \textbf{-0.395}
                    & \textbf{-0.106} & \textbf{-2.234} & \textbf{-0.113} & \textbf{-2.079} \\
    \bottomrule
  \end{tabular}}
\end{table*}

\begin{figure*}[t]
    \centering
    \includegraphics[width=0.9\textwidth]{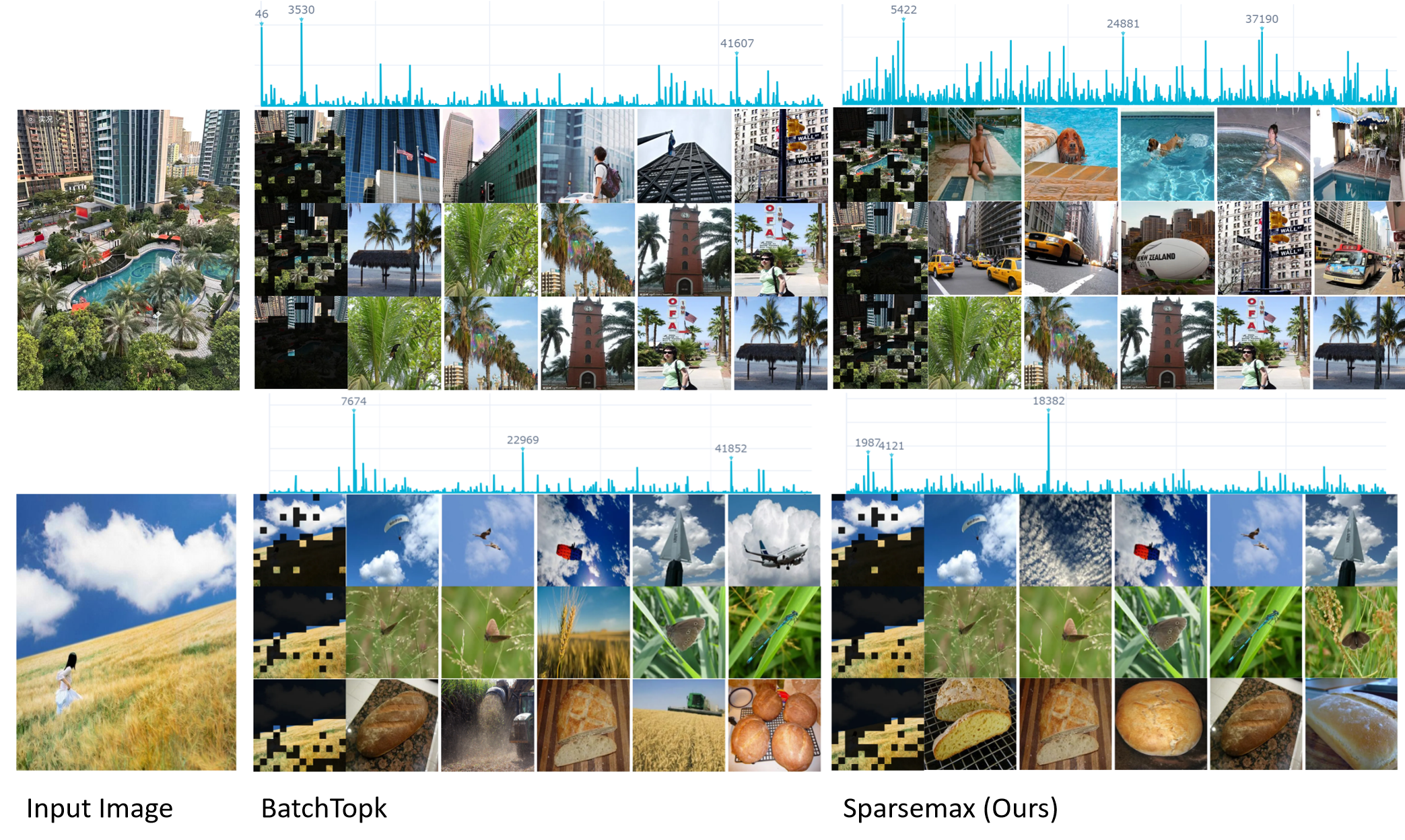}
    \caption{\small{Visualization of top three concepts given the reference image. For each concept, we provide its masking map within the input image and top five reference image from the ImageNet dataset. Compared to BatchSAE, our Sparsemax SAE learns clearer and more interpretable concepts.}}
    \label{fig:top3}
\end{figure*}

\subsection{Experimental Setup}
\paragraph{Datasets.} 
For the visual modality, we use the CLIP model with an image encoder of ViT-B/16. This results in a \texttt{CLS} and $14 \times 14$ image patch tokens as inputs. Following PatchSAE~\cite{patchsae}, we extract the ViT output from the residual stream of the second-last attention layer. Therefore, the training visual data has a size of $N_{img} \times N_{patch} \times d$, with $N_{img}$ and $N_{patch}$ denoting the number of images and patches per image, respectively. We train all SAEs on the ImageNet dataset~\cite{deng2009imagenet}, and evaluate the performance on 11 classification datasets: ImageNet~\citep{deng2009imagenet} and Caltech101~\citep{fei2004learning} for generic object classification, OxfordPets~\citep{parkhi2012cats}, StanfordCars~\citep{krause20133d}, Flowers102~\citep{nilsback2008automated}, Food101~\citep{bossard2014food} and FGVCAircraft~\citep{maji2013fine} for fine-grained image recognition, EuroSAT~\citep{helber2019eurosat} for satellite image classification, UCF101~\citep{soomro2012ucf101} for action classification, DTD~\citep{cimpoi2014describing} for texture classification, and SUN397~\citep{xiao2010sun} for scene recognition.
For the textual modality, we choose GPT-2 Small~\cite{radford2019language} as our pre-trained model, and extract the hidden embeddings from the residual stream of the $8$-th transformer layer. We train all SAEs on the training set of the OpenWebText dataset~\cite{Gokaslan2019OpenWeb}, which was processed into sequences of a maximum of 128 tokens for input into the language models. We report the reconstruction results on the test sets of the OpenWebText and WikiText-103 datasets. 
\vspace{-4mm}

\paragraph{Baselines}
We compare our Sparsemax SAE against a range of state-of-the art baselines, including \textbf{1) ReLU-based SAEs}: ReLUSAE~\cite{robert_sae}, a pioneering method that explains LLMs using SAE; 
PatchSAE~\cite{patchsae} views the image patches as tokens and extracts interpretable concepts at both the image and patch levels; 
GateSAE~\cite{gatedrelu} designs two encoders to model the position and coefficients of the concepts simultaneously;
JumpReLU~\cite{rajamanoharan2024jumping} introduces a learnable threshold into ReLU to alleviate the feature shrinkage issue. 
And \textbf{2) TopK-based SAEs}: 
TopKSAE~\cite{templeton2024scaling} directly keeps the $K$ largest concepts and zeroes out the others;
BatchTopK~\cite{bussmann2024batchtopk} relaxes TopKSAE by introducing the batch-level operation, where samples within a batch size share $K \times N_{batch}$ activations.

Unlike above models that require additional regularization loss or hyperparameter tuning, Our sparsemax SAE dynamically determines the optimal sparsity level based on feature complexity, demonstrating greater flexibility and interpretability.
\vspace{-8pt}
\paragraph{Implementation Details.} Following prior research~\cite{patchsae}, for the CLIP model, we set the number of concepts to $M=49,152$, as a value 64 times the latent dimension of the ViT model. The batch size is 32, and the training continued until a total of $2,621,440$ patches were feed. For the GPT-2 Small model, we conduct experiments with $M=3072,6144,12288$ and $24576$ to test the performance with different dictionary sizes. The batch size is 128 and training continued until a total of $1 \times 10^9$tokens were processed. All models were trained using the Adam optimizer with a learning rate of $3 \times 10^{-4}$, $\beta_1=0.9$ and $\beta_2=0.99$. For all baselines, we load the suggested hyperparameters according to their papers ($K=32$ for TopK-based SAEs and the sparsity weighting set to $1e^{-3}$ for ReLU-based SAEs). 

\subsection{Results Analysis}
\subsubsection{Zero-Shot Image Classification}
To measure the quality of the learned concepts of SAEs and explore whether the activated concepts per class capture the core class-level concepts, we conduct zero-shot image classification tasks by replacing the intermediate embeddings of ViT in CLIP with the SAE reconstruction embeddings using only the top $n=1,5,10,50$ concepts. 
The final prediction is calculated by the cosine similarity between textual prototypes and steered image features. To specify the subset concepts per class, we follow PatchSAE~\cite{patchsae} and first collect SAE latent activations from the training set of each dataset, and then select the largest $n$ concepts according to their activation frequency. These concepts are then act as masks to control the ViT's reconstruction. Intuitively, the selected top-n concepts capture key information of that class, and higher classification performance denotes a higher quality of the learned concepts.

Fig.~\ref{fig:fsl} reports the classification results of our Sparsemax SAE and baselines on 11 datasets. We also report the results with original ViT embeddings (CLIP) and results with all concepts (on\_49152). From the results, we have the following interesting finds: \textbf{1)} Overall, our Sparsemax SAE achieves the best average performance across 11 datasets on all top-n settings (top-left subfigure). Particularly at extremely small n values (i.e., $n=1,5,10$), our method significantly outperformed the second-best model. these results demonstrate that our sparse attention-based SAE is much more effective than ReLU and TopK-based SAEs. Our approach successfully assigns the most relevant concepts to the latent features, improving the representation learning of the dictionary. \textbf{2)} SAE-based models outperform CLIP on the EuroSAT and DTD datasets $n=10,50$ settings, whose images differ significantly from the pre-trained natural images. This demonstrates that the learned concepts are also generalizable and can be useful in zero-shot scenarios due to their monotonicity. \textbf{3)} Intuitively, more concepts should yield better reconstruction quality and thus higher prediction scores. However,we observe performance declining from on\_50 to on\_49152 across multiple datasets. we attribute this to the denoising capability of SAEs, where the top-n concepts capture the clear visual patterns that aid in identifying object labels. 

\subsubsection{Reconstruction Results on Text}
Following previous works~\cite{bussmann2024batchtopk}, we evaluate the reconstruction performance of our approach in terms of normalized mean squared error (NMSE) and cross-entropy (CE) degradation on the OpenWeb and WikiText-103 corpora, and report the results in Table.~\ref{tab:nmse} and ~\ref{tab:ce}, respectively. Similarly, we replace the intermediate features in GPT-2 Small with the reconstruction output of SAEs and measure the difference with the original outputs. Experiments demonstrate that across all datasets and dictionary sizes, our proposed Sparsemax SAE not only achieves significantly lower NMSE scores than other methods but also exhibits reduced CE degradation. This proves that Sparsemax SAE not only decouples polysemantic features into interpretable concepts but also reconstructs input data with lower information loss. The dynamic sparse attention mechanism enables the model to leverage more conceptual representations for complex features, fully demonstrating its effectiveness.

\begin{figure*}[!h]
    \centering
    \includegraphics[width=0.98\linewidth]{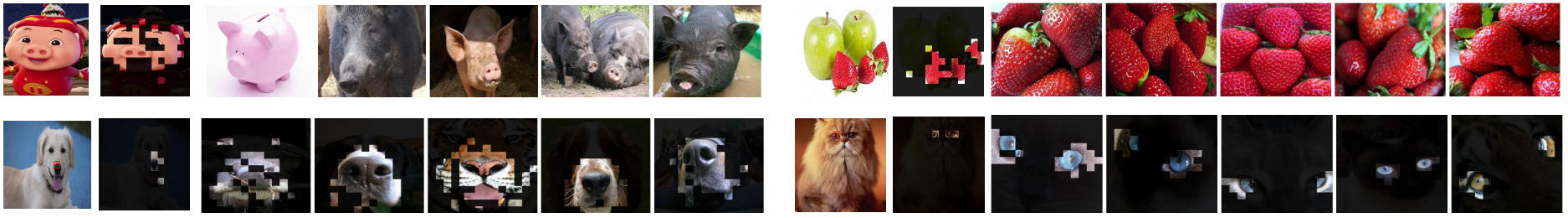}
    \caption{\small{Visualizations of the top one concept. For the query image, we interprete its most relevant concept from the image-level (top row) and patch-level (bottom row), respectively.}}
    \label{fig:top1}
    \vspace{-2mm}
\end{figure*}

\subsubsection{Ablation Study}
In this section, we aim to ablate the impacts of our proposed modules: transformer-based architecture and sparsemax attention. Specifically, we mainly consider two variants: MLP-based SAEs with the sparsemax function and transformer-based SAEs with the ReLU function. For the latter, we employ the $L_1$ regularization to promote sparsity. Table.~\ref{tab:imagenet} reports the image classification results on the ImageNet dataset. From the ablations, we find that both the introduced modules improve the performance of our baseline. The transformer framework connects the encoder and decoder by sharing the same concept vectors, resulting in more coherent concepts. The sparsemax function enables the model to dynamically determine the number of activations according to the feature's complexity, improving the trade-off between the sparsity and reconstruction.

\begin{table}[t]
  \centering
  \caption{\small{Ablation results on the ImageNet dataset.}}
  \vspace{-3mm}
  \label{tab:imagenet}
  \footnotesize
  \setlength{\tabcolsep}{4pt}
  \begin{tabular}{lrrrrr}
    \toprule
     & \texttt{on\_1} & \texttt{on\_5} & \texttt{on\_10} & \texttt{on\_50}& \texttt{on\_49152}  \\
    \midrule
    ReLU SAE    &  3.12 & 15.83 & 22.17 &34.87 & 63.67\\
    Transformer + ReLU    &3.86 &16.85  &24.08  & 36.33 & 63.94 \\
    MLP + Sparsemax     &7.91 &29.87  &39.73  &55.32  &64.74  \\
    \hline
    Sparsemax SAE (Ours)  & 10.93 & 33.47 & 42.13 &59.95 &65.09 \\
    \bottomrule
  \end{tabular}
\end{table}

\begin{table}[t]
  \centering
  \caption{\small{Zero-shot classification results of different $K$ on the Food101 dataset.}}
  \vspace{-3mm}
  \label{tab:food}
  \footnotesize
  \setlength{\tabcolsep}{4pt}
  \begin{tabular}{lrrrrr}
    \toprule
 & \texttt{on\_1} & \texttt{on\_5} & \texttt{on\_10} & \texttt{on\_50}& \texttt{on\_49152}  \\
    \midrule
    TopK ($K=32$)   &  8.64& 21.74 &30.21&57.89 & 54.96 \\
    BatchTopK ($K=32$) &6.46&37.86&48.99& 68.37 & 75.44 \\
    \hline
    TopK ($K=24$)   & 0.99& 24.42&42.88 &66.80 &47.67   \\
    BatchTopK ($K=24$)& 7.36&44.53 &49.52 &69.40 & 75.60  \\
    \hline
    Sparsemax SAE (Ours) & 26.11&51.71& 59.23 &69.49 &79.95 \\
    \bottomrule
  \end{tabular}
\end{table}

\subsubsection{Further Analysis}
Our sparsemax SAE is trained solely with reconstruction loss, with sparsity determined by the dataset. This enables optimal $K$ selection for TopK-based SAE.Specifically, we first collect all activations of our sparsemax SAE on the ImageNet dataset, and calculate the average number of activated concepts per sample $K^*=24$. We then re-train TopK-based SAE with the new $K^*$. Table.~\ref{tab:food} reports the results of $K=24,32$. We find that the calculated $K$ using our approach is a better choice for both TopK and BatchTopK SAEs in most cases. This improvement suggests that our sparsemax attention is able to estimate the level of sparsity.

\subsubsection{Visualization Results}
In addition to the above quantitative analysis, we also provide visualizations of the learned concepts. First, we aim to evaluate the top three concepts activated for each given image. Specifically, we use the \texttt{CLS} token as the global representation of the images, and obtain the top three concepts with the largest activation scores. Fig.~\ref{fig:top3} shows the comparison results of BatchTopK and our Sparsemax SAE. For each model, we also visualize the activation scores of all concepts. For each selected concept, we visualize its corresponding patches of the input image and the top five reference images from the ImageNet dataset, which provide an easy tool to understand the meaning of the concepts. From the results, we find that our approach successfully disentangles the core concepts from the input image\cite{liu2024patch},and the top three concepts show clear and specific visual patterns. For example, Sparsemax SAE successfully extracts concepts of the swimming pool, building, and tree from the input image. Compared to BatchTopK, we find that both models are able to extract the relevant concepts. However, BatchTopK sometimes produces redundant or unclear patterns. For example, the second and third concepts of the first image learned from BatchTopK share similar reference images, and the third concept of the second image contains both the wheat and bread patterns.

We also provide the visualization of the top one concept at the image level (top row of Fig.~\ref{fig:top1}) and patch level (bottom row of Fig.~\ref{fig:top1}). We find that the top one concept successfully extracts the key information of the given image. For the patch-level visualization, our approach accurately localizes the dog's nose and cat's eyes in the reference images, which demonstrates the high-quality of the learned concepts. This suggests that the learned concepts exhibit internal semantic structure that could be further modeled with graph-based discovery methods \cite{duan2021topicnet}.

\vspace{-3mm}
\section{Conclusion}
In this paper, we present a novel transformer-based SAE with sparsemax attention mechanism, where the input feature acts as the query and the concepts are modeled as the key and value matrices. The sparsemax function is then employed to steer the sparsity between the query and key attention. The transformer structure connects the encoder and decoder of SAEs by sharing the same concept vectors, while the sparsemax function enables the model to dynamically determine the sparsity level based on the feature's complexity. This synergy enhances the concept learning and reconstruction performance of traditional SAEs. Extensive experiments across image classification, text reconstruction, and visualization validate the effectiveness of our model. We hope our sparsemax SAE will offer novel insights for secure artificial intelligence and interpretability research.

\clearpage
\subsection*{Acknowledgments}

This work was supported in part by National Key R\&D Program of China (2024YFB3908500, 2024YFB3908502), NSFC (62506237, 62576215), ICFCRT (W2441020), Guangdong Basic and Applied Basic Research Foundation (2023B1515120026), Shenzhen Science and Technology Program (KQTD20210811090044003, KJZD2024 0903100022028, RCJC20200714114435012), and Scientific Development Funds from Shenzhen University.
{
    \small
    \bibliographystyle{ieeenat_fullname}
    \bibliography{main}
}
\clearpage
\setcounter{page}{1}
\maketitlesupplementary

\section{Proof of Prop.~\ref{prop1}}

\paragraph{1. Problem formulation.}
Recall that sparsemax is the Euclidean projection of $\zv\in\mathbb R^M$ onto the probability simplex
\begin{equation}
    \Delta^{M-1} = \{\pv\in\mathbb R^M \mid \mathbf{1}^\top\pv = 1,\ \pv\ge\mathbf{0}\},
\end{equation}
i.e.
\begin{equation}
    \mathrm{sparsemax}(\zv)=\arg\min_{\pv\in\Delta^{M-1}} \tfrac{1}{2}\|\pv-\zv\|^2_2.
\end{equation}

Equivalently, we solve the constrained quadratic program
\begin{equation}
    \min_{\pv\in\mathbb R^M}\ \frac{1}{2}\sum_{i=1}^M (p_i - z_i)^2
\quad\text{subject to}\quad
\sum_{i=1}^M p_i = 1,\quad p_i \ge 0\ \forall i.
\end{equation}

\paragraph{2. Lagrangian and KKT conditions.}
Introduce the Lagrange multiplier $\tau\in\mathbb R$ for the equality constraint $\sum_i p_i = 1$ and multipliers $\mu_i\ge0$ for the inequality constraints $p_i\ge0$. The (augmented) Lagrangian is
\begin{equation}
    \mathcal L(\pv,\tau,\muv)
= \frac{1}{2}\sum_{i=1}^M (p_i - z_i)^2
+ \tau\Big(\sum_{i=1}^M p_i - 1\Big)
- \sum_{i=1}^M \mu_i p_i.
\end{equation}
Karush--Kuhn--Tucker (KKT) conditions for optimality are: \\
(i) \emph{Primal feasibility}: $\sum_{i=1}^M p_i = 1$, $p_i \ge 0$ for all $i$.\\
(ii) \emph{Dual feasibility}: $\mu_i \ge 0$ for all $i$.\\
(iii) \emph{Stationarity}:
\begin{equation} \label{eq12}
    \frac{\partial \mathcal L}{\partial p_i} = p_i - z_i + \tau - \mu_i = 0
\quad\Longrightarrow\quad
p_i = z_i - \tau + \mu_i,\qquad \forall i.
\end{equation}

(iv) \emph{Complementary slackness}:
\begin{equation}\label{eq13}
    \mu_i p_i = 0,\qquad \forall i.
\end{equation}

Given Eq.~\ref{eq12} and Eq.~\ref{eq13}, we have:

\begin{itemize}
  \item If $p_i > 0$, complementary slackness forces $\mu_i = 0$. Hence
  \[
  p_i = z_i - \tau \qquad\text{and thus}\qquad z_i - \tau > 0.
  \]
  \item If $p_i = 0$, complementary slackness allows $\mu_i \ge 0$ and stationarity gives
  \[
  0 = z_i - \tau + \mu_i \quad\Longrightarrow\quad \mu_i = \tau - z_i.
  \]
  Since $\mu_i \ge 0$, we conclude $\tau - z_i \ge 0$, i.e. $z_i \le \tau$.
\end{itemize}

Combining both cases yields the compact expression (Eq.~\ref{sparsemax4} in the main paper)
\begin{equation}
    p_i = \max(z_i - \tau,\,0),\qquad \forall i,
\end{equation}

\paragraph{3. Determining $\tau$ and the active set.}
Let the active set (support) be
\begin{equation}
S = \{i \mid p_i > 0\} = \{i \mid z_i > \tau\},\qquad k := |S|.
\end{equation}

Summing $p_i = z_i - \tau$ over $i\in S$ and using $\sum_{i=1}^M p_i = 1$ gives
\begin{equation}
    \sum_{i\in S} (z_i - \tau) = 1
\quad\Longrightarrow\quad
\sum_{i\in S} z_i - k\tau = 1.
\end{equation}
Thus
\begin{equation}
\tau = \frac{\sum_{i\in S} z_i - 1}{k}.
\end{equation}

To find $S$ efficiently, sort the coordinates in descending order:
\begin{equation}
z_{(1)} \ge z_{(2)} \ge \cdots \ge z_{(M)}.
\end{equation}

If the optimal support corresponds to the top $k$ indices (this is always the case: if some $j\in S$ were not among the top $k$, there would be an index in the top $k$ not in $S$ with a larger $z$, contradicting $z_j>\tau$), then
\begin{equation}
\tau_k = \frac{\sum_{i=1}^k z_{(i)} - 1}{k}.
\end{equation}

The correct $k$ is the largest integer for which the $k$-th sorted element exceeds this threshold:
\begin{equation}
z_{(k)} > \tau_k
\quad\Longleftrightarrow\quad
z_{(k)} + \frac{1-\sum_{i=1}^k z_{(i)}}{k} > 0.
\end{equation}

Therefore set
\begin{equation}
    k = \max\Big\{ r\in\{1,\dots,M\}\ \Big|\ z_{(r)} + \frac{1 - \sum_{i=1}^{r} z_{(i)}}{r} > 0\Big\},
\end{equation}
and then take $\tau=\tau_k$, which completes the proof.


\section{More Visualization Results}
\paragraph{Analysis of Top 3 Concepts.}
Fig.~\ref{fig:fruit}-~\ref{fig:wlof} show the visualizations of the top three concepts of the test image. Overall, we find that our Sparsemax SAE successfully captures the key visual patterns. For example, the mixture of fruit, wooden background, and apples in Fig.~\ref{fig:fruit}. Moreover, our approach is able to understand visual features from different perspectives. For example, the first concept in Fig.~\ref{fig:pig} corresponds to a pig (the main object of the input image), the second concept is related to cartoon characters, and the third concept is about the dressing.

\begin{table}[!t]
  \centering
  \caption{Sparsity metric values.}
  \label{tab:cknna_subset}
  \begin{tabular}{lccccc}
    \toprule
    Model & $L_0\uparrow$ & FVU$\downarrow$ & CS$\uparrow$ & CKNNA$\uparrow$ & DO$\downarrow$ \\
    \midrule
    ReLU        & 0.928 & \textbf{0.098} & \textbf{0.953} & \textbf{0.812} & 0.003 \\
    TopK        & 0.966 & 0.169 & 0.925 & 0.701 & 0.003 \\
    BatchTopK   & 0.814 & 0.278 & 0.904 & 0.750 & 0.002 \\
    Ours        & \textbf{0.979} & 0.129 & 0.934 & 0.796 & \textbf{0.001} \\
    \bottomrule
  \end{tabular}
\end{table}

\begin{table}[!t]
  \centering
  \caption{Interpretability metrics.}
  \label{tab:interp_metrics}
  \begin{tabular}{lcc}
    \toprule
    Model & MEAN-MS & MAX-MS \\
    \midrule
    ReLU        & 0.1627 & 0.9172 \\
    TopK        & 0.0548 & 0.8751 \\
    BatchTopK   & 0.1243 & 0.9031 \\
    Ours        & \textbf{0.3484} & \textbf{0.9575} \\
    \bottomrule
  \end{tabular}
\end{table}

\begin{table*}[!h]
\centering
\caption{\small{Zero-shot classification results on the Caltech101 dataset. $K=32$ in TopK and BatchTopK.}}
\label{tab:calth101_transposed}
\scalebox{0.9}{
\begin{tabular}{ccccccccccccc}
\toprule
 & \texttt{no\_sae} & \texttt{on\_1} & \texttt{on\_5} & \texttt{on\_10} &  \texttt{on\_50} &\texttt{on\_49152} \\
\midrule
ReLU   & 32.403 &  7.816 & 17.591 & 21.033 & 28.671& 31.672  \\
JumpReLU   & 32.403 &3.578 &12.840  &18.594  &27.845  &28.974  \\
Gated    & 32.403 &5.894 &14.054  &22.847  &28.847  &30.158  \\
TOPK   & 32.403 &5.94 & 16.632 & 23.096 & 30.654 & 29.645 \\
BatchTopK   & 32.403 &  11.567 & 23.144 &27.314 & 30.582 &31.284   \\
\hline
Saprsemax SAE (Ours) & 32.403 & 20.121 & 26.195 & 29.322  &31.325 &31.875  \\
\bottomrule
\end{tabular}}
\end{table*}

\begin{table*}[!t]
\centering
\caption{\small{Zero-shot classification results on the DTD dataset. $K=32$ in TopK and BatchTopK.}}
\label{tab:dtd_transposed}
\scalebox{0.9}{
\begin{tabular}{ccccccccccccc}
\toprule
 & \texttt{no\_sae} & \texttt{on\_1} & \texttt{on\_5} & \texttt{on\_10} &  \texttt{on\_50} &\texttt{on\_49152}   \\
\midrule
ReLU   & 44.840 & 13.457 & 27.074 & 34.326  &44.468& 41.596  \\
JumpReLU   & 44.840 &4.825 &22.884  &30.520  &44.495  &40.367  \\
Gated    & 44.840 &5.989 &15.048  &21.495  &47.187  &37.365  \\
TopK   & 44.840 &  6.117 & 15.372 & 22.624  & 46.365&35.851  \\
BatchTopK  & 44.840 & 17.730 &30.372 & 36.383 &49.007& 40.957  \\
\hline
Saprsemax SAE (Ours) & 44.840 & 34.804 & 41.791 & 46.986 & 50.16 &44.025&  \\

\bottomrule
\end{tabular}}
\end{table*}

\begin{table*}[!t]
\centering
\caption{\small{Zero-shot classification results on the EuroSAT dataset. $K=32$ in TopK and BatchTopK.}}
\label{tab:eurosat_transposed}
\scalebox{0.9}{
\begin{tabular}{ccccccccccccc}
\toprule
 & \texttt{no\_sae} & \texttt{on\_1} & \texttt{on\_5} & \texttt{on\_10} & \texttt{on\_50} & \texttt{on\_49152} & \\
\midrule
ReLU   & 38.605 & 10.200 & 24.826 & 32.939& 39.832&  33.315 \\
JumpReLU   & 38.605 &11.680 &20.475  &33.648  &40.987  &32.158  \\
Gated    & 38.605 &15.815 &32.487  &39.846  &44.682  &34.577  \\
TopK   & 38.605 & 16.46 & 11.468 & 25.344& 47.041 & 32.741 \\
BatchTopK &38.605 & 10.0 & 34.990 & 34.957& 45.317& 31.338  \\
\hline
Saprsemax SAE (Ours) & 38.605 & 19.301& 36.463 & 34.056&47.839& 32.421 \\
\bottomrule
\end{tabular}}
\end{table*}

\begin{table*}[!t]
\centering
\caption{\small{Zero-shot classification results on the FGVC dataset. $K=32$ in TopK and BatchTopK.}}
\label{tab:fgvc_transposed}
\scalebox{0.9}{
\begin{tabular}{ccccccccccccc}
\toprule
 & \texttt{no\_sae} & \texttt{on\_1} & \texttt{on\_5} & \texttt{on\_10}&\texttt{on\_50} & \texttt{on\_49152}  \\
\midrule
ReLU   & 23.137 &  2.371 &  2.617 &  3.314 &7.431 &  19.803  \\
JumpReLU   & 23.137 &2.647 &2.689  &4.047  &7.846  &17.358  \\
Gated    & 23.137 &1.574 &4.855  &5.128  &7.748  &11.547  \\
TopK   & 23.137 &  1.02&  1.683  &  3.156&9.18  &7.59 \\
BatchTopK & 23.136 &  0.990 & 5.191 & 7.390 & 10.805& 14.362  \\

\hline
Saprsemax SAE (Ours) & 23.137 &  4.8346 &  7.069& 8.14871 & 10.894 &19.5187 \\
\bottomrule
\end{tabular}}
\end{table*}

\paragraph{Failure Cases.}
We also provide several failure cases of our Sparsemax SAE in Fig.~\ref{fig:messi}-~\ref{fig:football}. On one hand, we find that the learned concepts sometimes contain unclear visual patterns (The third concept in Fig.~\ref{fig:messi} and the first concept in Fig.~\ref{fig:football}). This may stem from feature absorption issues, where the learned concepts fail to decompose into into their subconcepts. On the other hand, the learned concepts occasionally share the similar masking concent of the input image. We attribute this to the fine-grained feature of the learned concepts, where concepts capture the similar visual patterns while focusing on distinct dimensions. For example, the concepts of baby, cute girl, and playing girl in Fig.~\ref{fig:girl}.

\paragraph{Sparsity and Interpretability Analysis}
To further evaluate the quality of the learned concepts, we report sparsity metric values in Table.~\ref{tab:cknna_subset} and interpretability metrics in Table.~\ref{tab:interp_metrics}. Following prior work~\cite{karvonen2408measuring, marks2024enhancing}, we compute metrics including $L_0$ (higher is better), FVU (fraction of variance unexplained, lower is better), CS (cosine similarity, higher is better), CKNNA (higher is better), and DO (dead concepts, lower is better). For interpretability, we follow the evaluation protocol in \cite{pach2025sparse} and report both the mean and maximum monosemantic scores of the learned concepts. Our Sparsemax SAE achieves the best performance under most metrics, demonstrating that the dynamic attention mechanism not only produces sparse representations but also yields more interpretable and semantically coherent concepts.

\section{Results of Zero-shot Image Classification}
We report the detailed zero-shot image classification results in Table.~\ref{tab:calth101_transposed}-~\ref{tab:ucf101_transposed}.

\begin{table*}[t]
\centering
\caption{\small{Zero-shot classification results on the Food101 dataset. $K=32$ in TopK and BatchTopK.}}
\label{tab:food101_transposed}
\scalebox{0.9}{
\begin{tabular}{ccccccccccccc}
\toprule
 & \texttt{no\_sae} & \texttt{on\_1} & \texttt{on\_5} & \texttt{on\_10} & \texttt{on\_50} & \texttt{on\_49152} \\
\midrule
ReLU   & 83.195 & 17.991 & 34.420 & 39.286 &56.2719 &78.285  \\
JumpReLU   & 83.195 &19.475 &35.187  &39.954  &67.257  &75.481  \\
Gated    & 83.195 &7.458 &25.486  &36.487  &60.875  &77.584  \\
TopK   & 83.195 & 8.644& 21.735 & 30.207 &  57.888&54.956  \\
BatchTopK & 83.194 & 6.459 & 37.864 & 48.996 & 68.365 &75.440\\

\hline
Saprsemax SAE (Ours) & 83.195 &26.1134&51.705& 59.23& 69.49& 79.954  \\
\bottomrule
\end{tabular}}
\end{table*}

\begin{table*}[t]
  \centering
  \caption{\small{Zero-shot classification results on the ImageNet dataset. $K=32$ in TopK and BatchTopK.}}
  \label{tab:imagenet}
  \scalebox{0.9}{
  \begin{tabular}{lrrrrrrrrrrrr}
    \toprule
    & \texttt{no\_sae} & \texttt{on\_1} & \texttt{on\_5} & \texttt{on\_10} & \texttt{on\_50}& \texttt{on\_49152}  \\
    \midrule
    ReLU   & 67.294 &  3.116 & 15.829 & 22.173&34.872 & 63.670\\
    JumpReLU   & 67.294 &3.548 &17.558  &34.975  & 40.876 & 62.957 \\
    Gated    & 67.294 &5.568 &12.875  &35.495  &50.159  &62.547  \\
    TopK   & 67.294 &  3.421 & 8.2501 & 32.646&56.135 & 47.956  \\
    BatchTopK & 67.294 &1.58 &15.562 & 27.782& 58.472 & 61.072 \\
    \hline
    Saprsemax SAE (Ours) & 67.294 & 10.929 & 33.469 & 42.127 &59.947 &65.087 \\
    \bottomrule
  \end{tabular}}
\end{table*}

\begin{table*}[t]
  \centering
  \caption{\small{Zero-shot classification results on the ImageNet-Sketch dataset. $K=32$ in TopK and BatchTopK.}}
  \label{tab:imagenet-sketch}
  \scalebox{0.9}{
  \begin{tabular}{lrrrrrrrrrrrr}
    \toprule
    & \texttt{no\_sae} & \texttt{on\_1} & \texttt{on\_5} & \texttt{on\_10}& \texttt{on\_50} &  \texttt{on\_49152}  \\
    \midrule
    ReLU   & 47.851 &  2.798 & 10.283 & 13.451&23.479& 31.009  \\
    JumpReLU   & 47.851 &3.847 &12.657  &15.984  &26.581  &30.257  \\
    Gated    & 47.851 &1.782 &9.257  &13.576  &38.712  &32.157  \\
    TopK   & 47.851 &  0.957 &  8.1269 & 14.712&36.942& 30.991  \\
    BatchTopK & 47.851 &  0.3509& 7.349 &16.103&39.128& 34.178  \\
    \hline
    Saprsemax SAE (Ours) & 47.851 & 12.460 & 25.321 & 29.678& 39.328& 42.314 \\
    \bottomrule
  \end{tabular}}
\end{table*}

\begin{table*}[t]
  \centering
  \caption{\small{Zero-shot classification results on the OxfordPets dataset. $K=32$ in TopK and BatchTopK.}}
  \label{tab:oxfordpets}
  \scalebox{0.9}{
  \begin{tabular}{lrrrrrrrrrrrr}
    \toprule
    & \texttt{no\_sae} & \texttt{on\_1} & \texttt{on\_5} & \texttt{on\_10} & \texttt{on\_50} & \texttt{on\_49152}  \\
    \midrule
    ReLU   & 83.477 & 26.964 & 49.611 & 61.350& 71.127& 79.061  \\
    JumpReLU   & 83.477 &26.981 &50.258  &63.204  &73.980  &78.041  \\
    Gated    & 83.477 &18.547 &53.492  &64.581  &74.012  &76.251  \\
    TopK   & 83.477 &  6.973 & 44.886 & 52.492& 79.621& 65.274  \\
    BatchTopK & 83.477 & 17.081 & 56.985 & 67.100 &76.918 & 77.828 \\
    \hline
    Saprsemax SAE (Ours) & 83.477 & 25.186 &58.135 &67.925 &77.351 &80.986\\
    \bottomrule
  \end{tabular}}
\end{table*}

\begin{table*}[t]
  \centering
  \caption{\small{Zero-shot classification results on the StandfordCars dataset. $K=32$ in TopK and BatchTopK.}}
  \label{tab:standfordcars}
  \scalebox{0.9}{
  \begin{tabular}{lrrrrrrrrrrrr}
    \toprule
    & \texttt{no\_sae} & \texttt{on\_1} & \texttt{on\_5} & \texttt{on\_10}& \texttt{on\_50} & \texttt{on\_49152}   \\
    \midrule
    ReLU   & 31.976 & 0.828 & 3.307 & 5.383 &9.778 & 24.602  \\
    JumpReLU   & 31.976 &1.568 &3.964  &5.421  &9.157  &20.367  \\
    Gated    & 31.976 &0.540 &3.541  &6.034  &10.652  &19.068  \\
    TopK   & 31.976 & 0.610 & 2.735 & 3.263  &6.897 &8.7586  \\
    BatchTopK & 31.976 & 1.248 & 3.811 & 6.786 &10.1140 & 18.987\\
    \hline
    Saprsemax SAE (Ours) & 31.976 &3.421 & 8.25 &9.337&13.155  & 27.245 \\
    \bottomrule
  \end{tabular}}
\end{table*}

\begin{table*}[t]
  \centering
  \caption{\small{Zero-shot classification results on the SUN397 dataset. $K=32$ in TopK and BatchTopK.}}
  \label{tab:sun397}
  \scalebox{0.9}{
  \begin{tabular}{lrrrrrrrrrrrr}
    \toprule
    & \texttt{no\_sae} & \texttt{on\_1} & \texttt{on\_5} & \texttt{on\_10} & \texttt{on\_50}  & \texttt{on\_49152} \\
    \midrule
    ReLU   & 66.257 &  4.543 & 19.045 & 25.902& 44.859 & 60.704\\
    JumpReLU   & 66.257 &5.962 &22.084  &25.035  &48.258  &58.367  \\
    Gated    & 66.257 &6.058 &27.643  &31.247  &52.796  &58.947  \\
    TopK   & 66.257 &  5.2819 &20.468 &29.9 &54.277 & 45.027 \\
    BatchTopK & 66.257 & 5.117 &29.824 & 40.5  & 57.203& 60.431 \\
    \hline
    Saprsemax SAE (Ours) & 66.257 & 15.946 & 39.904& 48.4794  &56.973&63.788 \\
    \bottomrule
  \end{tabular}}
\end{table*}

\begin{table*}[t]
  \centering
  \caption{\small{Zero-shot classification results on the UCF101 dataset. $K=32$ in TopK and BatchTopK.}}
  \label{tab:ucf101_transposed}
  \scalebox{0.9}{
  \begin{tabular}{lrrrrrrrrrrrr}
    \toprule
    & \texttt{no\_sae} & \texttt{on\_1} & \texttt{on\_5} & \texttt{on\_10}& \texttt{on\_50} & \texttt{on\_49152}  \\
    \midrule
    ReLU   & 61.322 &  7.607 & 19.600 & 26.056&42.945 & 58.626  \\
    TopK   & 61.322 &  3.039 & 18.003& 26.157 & 41.396& 34.647 \\
    BatchTopK & 61.322 & 3.591 & 26.474 & 33.346&45.194 &  56.986  \\
    JumpReLU   & 61.322 &6.257 &20.587  &26.971  &43.579  &51.278  \\
    Gated    & 61.322 &4.068  &24.947  &30.189  &43.840 &52.497 \\
    \hline
    Saprsemax SAE (Ours) & 61.322 & 21.712 & 33.575 & 37.779 & 47.214 & 59.064   \\
    \bottomrule
  \end{tabular}}
\end{table*}

\begin{figure*}[ht]
    \centering
    \includegraphics[width=0.6\linewidth]{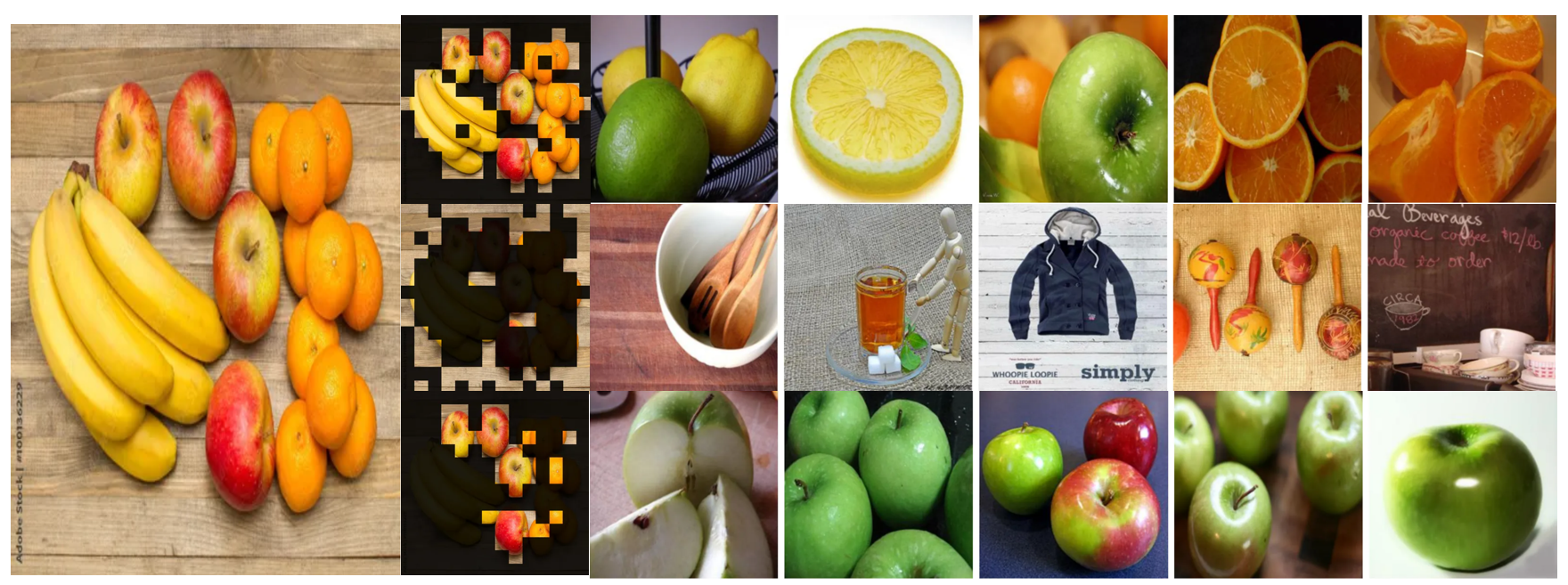}
    \caption{This picture successfully activates three different concepts: fruit, background and apple.}
    \label{fig:fruit}
\end{figure*}

\begin{figure*}[ht]
    \centering
    \includegraphics[width=0.6\linewidth]{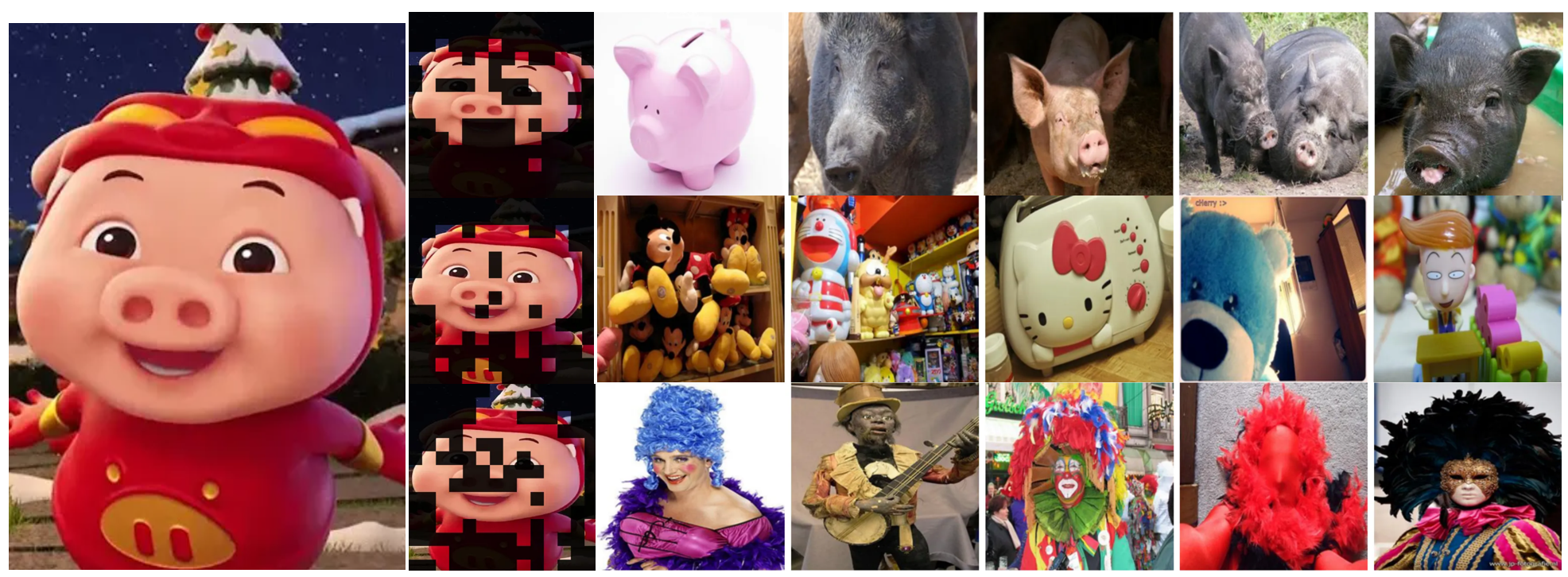}
    \caption{This picture is from the cartoon task image in the animation. It can be seen that it has the characteristics of pigs. Therefore, the features of pig are activated through sae, and other relevant features are activated according to its shape.}
    \label{fig:pig}
\end{figure*}

\begin{figure*}[ht]
    \centering
    \includegraphics[width=0.6\linewidth]{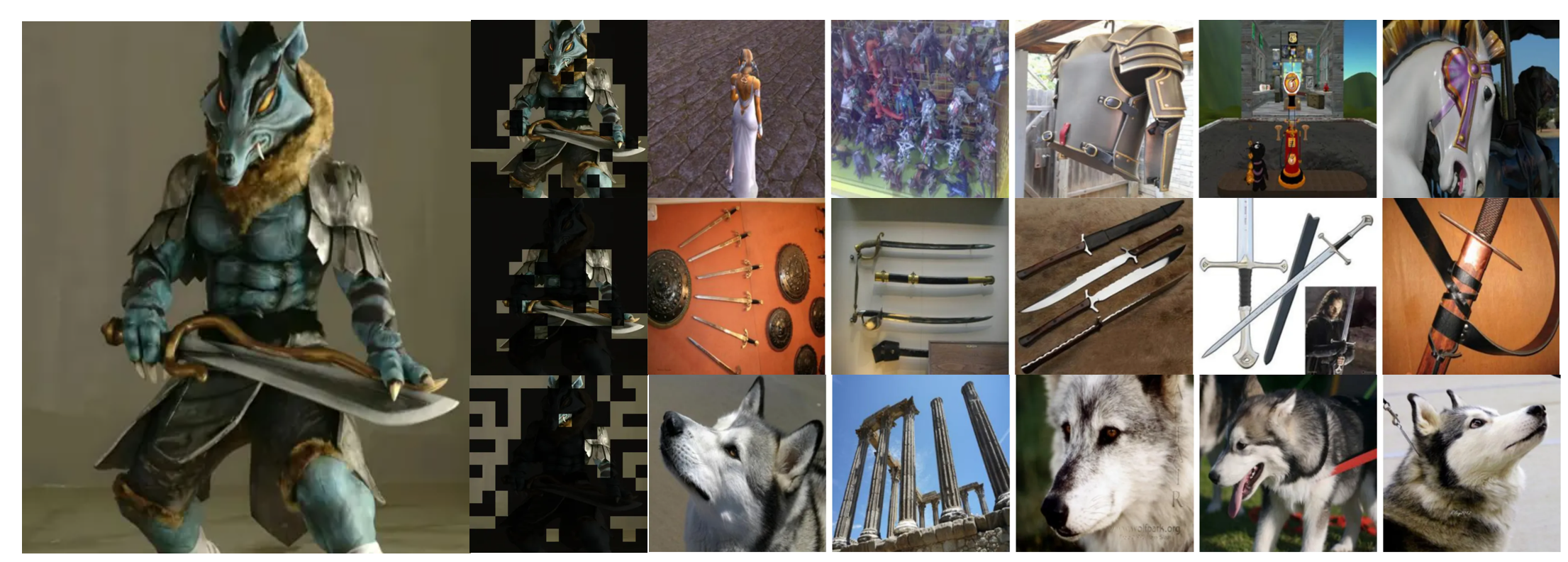}
    \caption{This picture is from the monster in the film and television image. Its wolf's head features activate the wolf's features through sae. In addition, his armor and weapons also activate the corresponding features.}
    \label{fig:wlof}
\end{figure*}

\begin{figure*}[ht]
    \centering
    \includegraphics[width=0.6\linewidth]{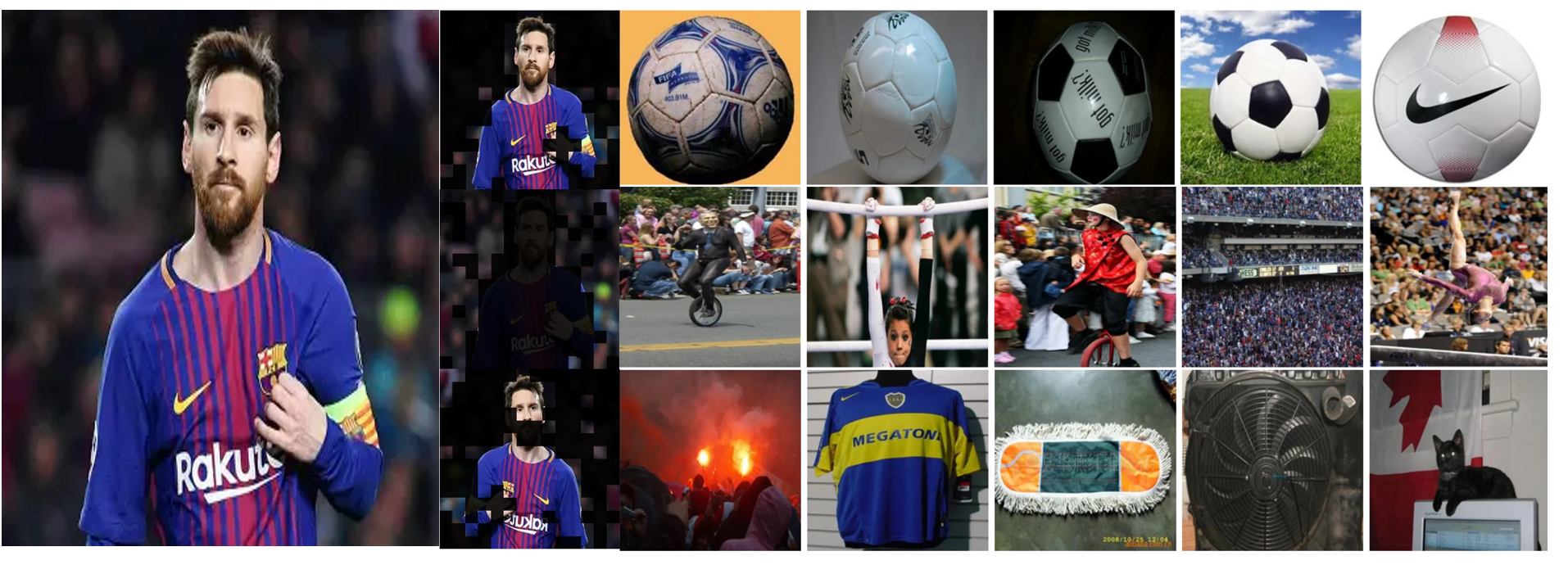}
    \caption{ Messi's picture in this example activated the football. In addition, in this experiment, sae activated the "crowd" feature through the background of the picture,but unfortunately there is a phenomenon of feature absorption in the third feature, where images of short sleeves and similar color structures are considered to be the same concept}
    \label{fig:messi}
\end{figure*}

\begin{figure*}[ht]
    \centering
    \includegraphics[width=0.6\linewidth]{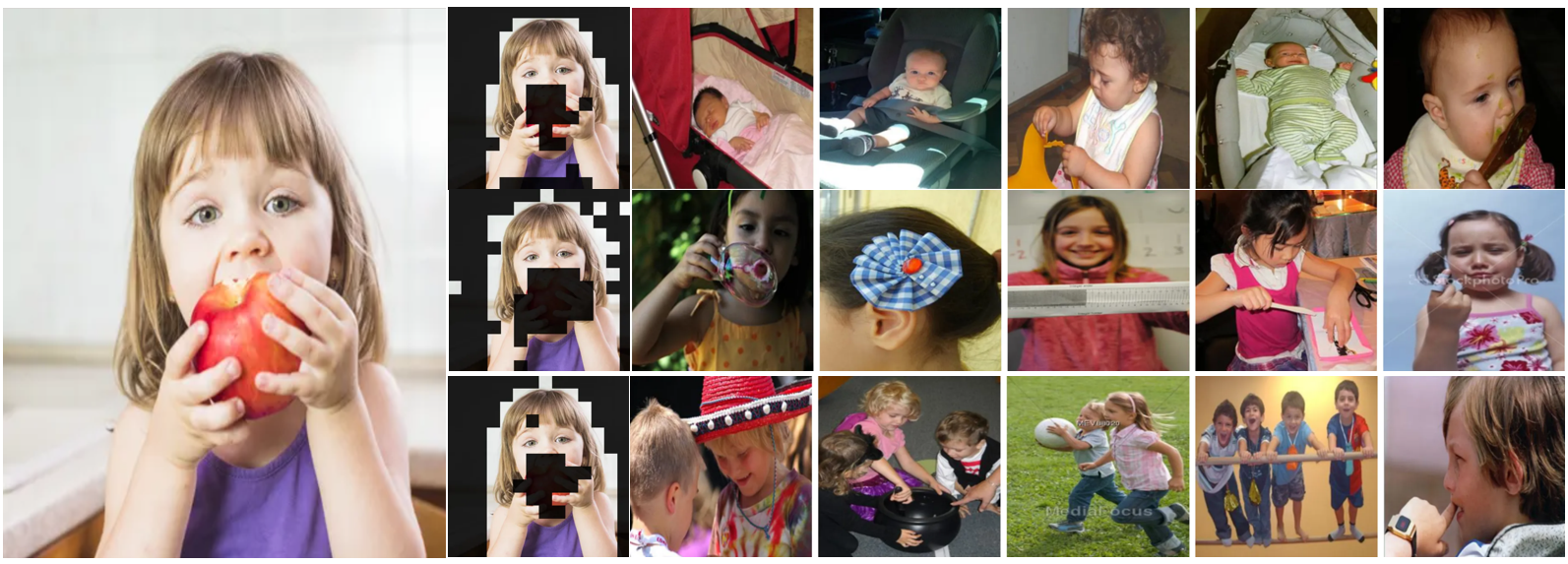}
    \caption{In this sample, pictures activate children and children's eating characteristics through sae.However,the concept of Apple was not recognized and the concept on the right is too similar.}
    \label{fig:girl}
\end{figure*}

\begin{figure*}[ht]
    \centering
    \includegraphics[width=0.6\linewidth]{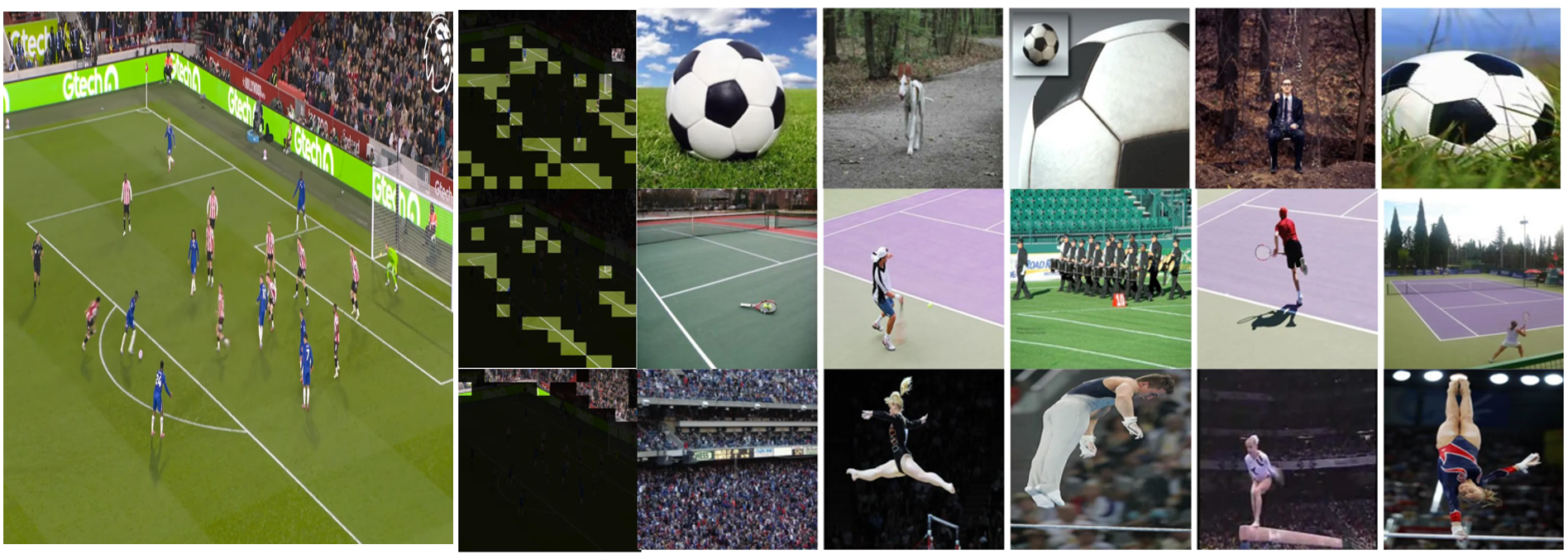}
    \caption{This picture is from a football match. The picture activates the football feature, field , and the crowd .But in the first concept, there were pictures unrelated to football}
    \label{fig:football}
\end{figure*}

\end{document}